\begin{document}
\title{MUC-G4: Minimal Unsat Core-Guided Incremental Verification for Deep Neural Network Compression}
\titlerunning{Incremental Verification for DNN Compression}
% If the paper title is too long for the running head, you can set
% an abbreviated paper title here
% \author{Anonymous Authors}
% \institute{}
\author{Jingyang Li \and Guoqiang Li}
\authorrunning{Jingyang, Guoqiang, et al.}
\titlerunning{}
%
% First names are abbreviated in the running head.
% If there are more than two authors, 'et al.' is used.
%
\institute{Shanghai Jiao Tong University, Shanghai 200240, China\\
\email{\{lijjjjj, li.g\}@sjtu.edu.cn}}

\maketitle              % typeset the header of the contribution
\begin{abstract}
The rapid development of deep learning has led to challenges in deploying neural networks on edge devices, mainly due to their high memory and runtime complexity. Network compression techniques, such as quantization and pruning, aim to reduce this complexity while maintaining accuracy. However, existing incremental verification methods often focus only on quantization and struggle with structural changes. This paper presents MUC-G4 (Minimal Unsat Core-Guided Incremental Verification), a novel framework for incremental verification of compressed deep neural networks. It encodes both the original and compressed networks into SMT formulas, classifies changes, and use \emph{Minimal Unsat Cores (MUCs)} from the original network to guide efficient verification for the compressed network. Experimental results show its effectiveness in handling quantization and pruning, with high proof reuse rates and significant speedup in verification time compared to traditional methods. MUC-G4 hence offers a promising solution for ensuring the safety and reliability of compressed neural networks in practical applications.

\keywords{neural network verification  \and SMT solving \and Minimal Unsat Core\and neural network compression}
\end{abstract}

\section{Introduction}\label{intro}

%网络压缩有可能引入对抗样本
In recent years, the field of deep learning has witnessed an exponential growth in the complexity and size of models, presenting significant challenges when it comes to their deployment on small devices. Quantization~\cite{DBLP:journals/corr/HanMD15} and network pruning~\cite{DBLP:conf/iclr/LiuSZHD19} have emerged as essential network compression techniques to address these challenges by reducing the storage and computational requirements of deep learning models, without causing significant performance loss. Unfortunately, security vulnerabilities or issues can be introduced during network compression. For instance, models may be vulnerable to adversarial examples~\cite{2013Intriguing}. The commonly used empirical estimation and adversarial example-finding techniques~\cite{2015Explaining,2013Intriguing,2017Towards,2017Towards1} can only provide insights into the average performance of the models but cannot guarantee consistent behavior across all inputs. The issue of (non-)robustness in neural networks~\cite{DBLP:journals/corr/HuangKWW16}, which has received increasing attention in the research community, emphasizes this concern.

%神经网络验证可以为网络提供形式化保证，但是在网络压缩中只能一个个验证
In order to improve the trustworthiness of neural networks, existing studies in neural network verification~\cite{Katz2017ReluplexAE,2017Verifying,2017An} have incorporated formal verification techniques into neural network evaluation and have made significant efforts to provide formal guarantees. Neural network verification typically focuses on verifying specifications for fixed network scenarios instead of incrementally verifying properties as the network changes. These methods verify the two networks in network compression separately from scratch, which consumes a large amount of computational resources and time when the preservation of the specification during network compression is concerned. 

%目前的增量验证基本只关注参数改变场景
Incremental verification is a formal technique that aims to verify only the modified portions of a system under changes, rather than rechecking the entire system from scratch. In the context of deep learning, this approach has been successfully employed in conjunction with backend methods like Reluplex or BaB. However, existing research mainly focuses on network parameter changes and overlooks the potential structural changes that frequently occur during network compression. Consequently, efficiently and accurately reusing proof for a structurally altered network in network compression poses a significant challenge. 

%这篇工作中我们提出了一个基于CDCL思想的新框架***，一句话简要介绍CDCL，因此，我们借用这个思想，并尝试将压缩前的网络验证过程中的信息利用起来对压缩后的网络验证过程进行剪枝。
In this work, we propose a novel SMT solving based incremental verification framework designed for neural network compression problem that incorporates the idea of \emph{conflict driven clause learning (CDCL)}. The CDCL framework~\cite{DBLP:series/faia/0001LM21}, in the context of SAT problem solving, is a powerful approach to drive the search for a satisfying assignment by iteratively analyzing the conflict between the current assignment and the constraints. Thus, we leverage the conflict analyzed during the verification of the original network to prune the search space in the verification of the compressed network, reducing the computational complexity and improving the efficiency of the verification process.

To understand the CDCL framework's advantage, consider a neural network $f$ that is first verified and then compressed to $f'$. Existing incremental verification methods typically reuse the initial subproblem generation process and sequence, assuming that the initial heuristic information can be directly applied to $f'$ without any adaptation. However, during network pruning, when neurons or connections are removed, the relationships among network components change. As a result, the original subproblems may not be applicable to $f'$. The computational complexity and importance of subproblems vary, and some original subproblems may even be missing in $f'$. These drawbacks call for a better approach that can dynamically select and prioritize subproblems during the verification of $f'$ by analyzing the conflicts identified during the initial verification of $f$, which we found similar to the idea hidden in the CDCL framework.

In our work, we encode the original network $f$ and the compressed network $f'$ as SMT formulas. Since $f'$ is derived from $f$ through quantization or pruning, the neuron count in $f'$ is either equivalent to (in quantization) or less than (in pruning) that in $f$. Each neuron in $f'$ has a natural correspondence with a neuron in $f$, which enables us to encode the SMT formulas for $f'$ based on those of $f$.
For both quantization and network pruning, we explicitly express $f'$ using SMT formulas. In line with the CDCL concept of analyzing conflicts, during the verification of $f$, if a particular combination of neuron activations leads to a contradiction with the network's overall behavior (such as violating the expected output range), we mark this combination as part of the conflict. By further refining these conflicts, we isolate the \emph{Minimal Unsat Cores (MUCs)}, which is the smallest non-redundant conflict units in infeasible paths. These cores are then checked and used to eliminate similar infeasible paths during the verification of $f'$, reducing the computational complexity.

To validate our MUC-guided incremental verification framework based on CDCL concept, we build a custom SMT solver as the backend neural network verifier. This self-implemented SMT solver uses an ordinary LP solver for theory solving. This way, we avoid potential hybrid effects from other neural network verifiers' search strategies. Our framework is adaptable and can integrate with any other neural network verifiers as its theory solver, showing its versatility.
To improve the verification process's scalability, we tackle the exponential explosion in SMT-based neural network verification with linear relaxation. We also use a heuristic method to choose the best neurons for ``unrelaxing'' to maintain completeness.
Finally, we conduct a performance evaluation using the ACAS Xu benchmark, MNIST-trained neural networks, and their compressed versions. The results show our approach is effective. It provides high quality proofs and speeds up verification time, validating the practicality and efficiency of our framework.

\smallskip\noindent \textbf{Contributions:} To summarize, our contributions are:    
\begin{enumerate}
\item We introduce MUC-G4, the first MUC-guided framework that unifies incremental verification for both quantization and pruning in neural network compression, enabling systematic handling of compression-related changes.

\item We propose a heuristic strategy that leverages MUCs from the original network's SMT solving process to optimize search paths, significantly reducing verification effort.

\item We implement MUC-G4 with Z3 SAT solver and an LP theory solver, demonstrating its effectiveness through experiments on ACAS-Xu and MNIST by achieving high proof validity and accelerated verification times.
\end{enumerate}

\section{Preliminaries}\label{pre}
\subsection{Neural Network and Notations}
A deep neural network (DNN) represents a function $f: I\rightarrow O$, where $I\subseteq \mathbb{R}^n$ denotes an input domain with $n$ features, and $O\subseteq \mathbb{R}^m$ corresponds to an output domain with $m$ dimensions. It consists of various layers, including an input layer, multiple hidden layers, and an output layer. An example of a hidden layer is the fully connected layer. 
\begin{figure}[htbp]
    \centering
    \includegraphics[width=0.4\linewidth]{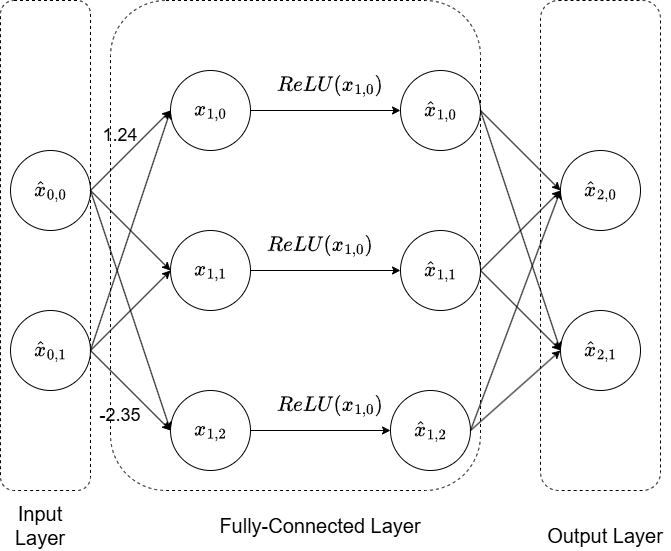}
    \caption{An example of a DNN $f$ containing one input layer, one fully-connected layer and one output layer.}
    \label{fig:DNN}
\end{figure}
In a fully connected layer, as depicted as in Fig. \ref{fig:DNN}, each neuron's calculation is based on the weighted sum of the neurons' outputs from the previous layer followed by an activation function. This means that the input from each neuron in the previous layer is multiplied by a weight (as indicated by the values like 1.24 and -2.35 in the figure), and then these weighted inputs are summed up and added with a bias (we assume to be 0 and thus ignore it in the figure). An activation function, such as the ReLU function shown in the figure, is then applied to this sum to introduce non-linearity into the model. The output is then passed on to the next layer. Note that the weights and the bias are all trainable parameters. Specifically, we focus on the ReLU (Rectified Linear Unit) function, expressed as $\mathrm{ReLU}(x) = \max(0, x)$, in our paper.

In this paper, we represent the set of positive integers up to $L$ as $[L]$. The weight matrix of the $k$-th layer is denoted as $\mathbf{W}_k$. The input of the $j$-th node in the $k$-th layer is referred to as $x_{k,j}$ while the output of the $j$-th node in the $k$-th layer is referred to as $\hat{x}_{k,j}$. The weight of the edge from $\hat{x}_{k-1,i}$ to $x_{k,j}$ is represented as $W_k[i,j]$. We use bold symbols like $\mathbf{x}_k$ and $\mathbf{W}_k$ for vectors and matrices, and regular symbols like $x_{k-1,i}$ and $W_k[i,j]$ for scalars. 

We define an ReLU neural network $f$ with an input layer, $L-1$ hidden layers and an output layer. The $i$-th hidden layer of the network contains $n_i$ neurons. The network $f$ is defined by its weight matrices $\mathbf{W}_k$ and bias vectors $\mathbf{b}_k$, where $k$ ranges from $1$ to $L$. By convention, the input layer is referred to as layer $0$. The neural network calculates the output $\mathbf{x}_L$  for an input $\hat{\mathbf{x}}_0$ by recursively applying affine transformation: $\mathbf{x}_k = \mathbf{W}_k\hat{\mathbf{x}}_{k-1}+\mathbf{b}_k$, and ReLU function: $\hat{\mathbf{x}}_k=\max(\mathbf{0},\mathbf{x}_k)$.

Due to the nature of the ReLU function, given an input $\mathbf{x}$, any intermediate neurons can be \textbf{activated} or \textbf{deactivated}. Specifically, if the input to a neuron is positive, the neuron is activated, meaning its output is typically equal to the input. Conversely, if the input is negative, the neuron is deactivated, and its output becomes zero. This piecewise linear behavior introduces a non-linearity in the network, allowing it to model complex functions. Furthermore, neurons that remain consistently active or inactive across a range of inputs can be considered \textbf{stable}, while neurons performs otherwise are denoted as \textbf{unstable}. 

\subsection{Neural Network Compression}
\begin{figure}[htbp]
    \centering
    \begin{minipage}[b]{0.45\linewidth} 
        \centering
        \includegraphics[width=\linewidth]{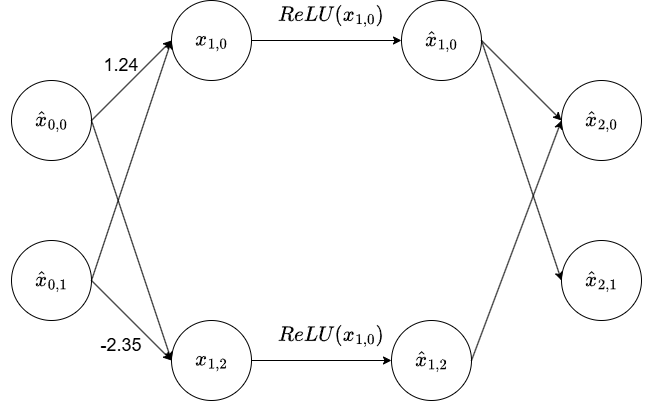}
        \caption{A pruned DNN $f_p$}
        \label{fig:pruned DNN}
    \end{minipage}
    \hfill
    \begin{minipage}[b]{0.45\linewidth}
        \centering
        \includegraphics[width=\linewidth]{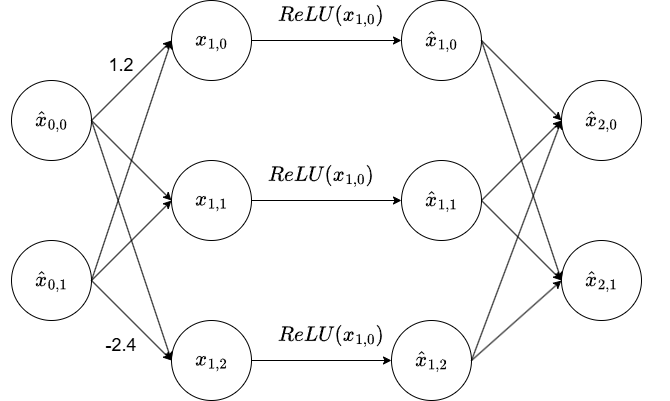}
        \caption{A quantized DNN $f_q$}
        \label{fig:quantized DNN}
    \end{minipage}
    
\end{figure}
DNNs are often constrained by their size and computational complexity, posing challenges for their deployment on edge devices. Consequently, various network compression methods have been introduced. One prominent approach is model quantization~\cite{DBLP:journals/corr/HanMD15,DBLP:conf/iccv/GongLJLHLYY19}. Model quantization involves compressing the parameters of a neural network, which are initially represented as floating-point values, into fixed-point (integer) representation. 
%The fixed-point data is subsequently dequantized back into floating-point data to obtain the final result. 
Model quantization can accelerate model inference and reduce model memory usage. It exploits the network's tolerance to noise, introducing a certain amount of noise (quantization error) that does not significantly impact accuracy if the number of quantization bits is appropriately chosen. 
An example is depicted in Fig. \ref{fig:quantized DNN}, where the weight from neuron $\hat{x}_{0,0}$ to neuron $x_{1,0}$ is quantized from $1.24$ to $1.2$ and the weight from neuron $\hat{x}_{0,1}$ to neuron $x_{1,2}$ is quantized from $-2.35$ to $-2.4$. 

Another significant class of compression techniques for neural networks is network pruning~\cite{DBLP:conf/cvpr/MolchanovMTFK19,DBLP:conf/nips/ChoAN23,DBLP:conf/iclr/LiuSZHD19}. Neural network pruning aims to eliminate redundant parts of a network that consume substantial resources while maintaining good performance. Despite the large learning capacity of large neural networks, not all components are actually useful after the training process. Neural network pruning leverages this fact to remove these unused parts without compromising the network's performance. An example is depicted in Fig. \ref{fig:pruned DNN}, where the neuron $x_{1,1}$ is pruned and the edge from $\hat{x}_{1,2}$ to $x_{2,2}$ is also pruned.

% \begin{figure}[htbp]
%     \centering
%     \includegraphics[width=0.5\linewidth]{Figure/prunedDNN.png}
%     \caption{A hidden layer of a pruned DNN $f_p$}
%     \label{fig:pruned DNN}
% \end{figure}

% Formally, given a neural network $f$ and its pruned version $f'$, network pruning is equivalent to the optimization problem:
% \begin{align*}
%     \max_{f'}\quad &Acc(f',D) \\
%      \text{s.t. } \quad &Budget(f\leftarrow f')\leq C
% \end{align*}
% , where $Acc(f',D)$ represents the accuracy of pruned network $f'$ on dataset $D$ and $Budget(f-f')$ represents the budget of the pruning process. 

% It should be noted that the majority of current quantization techniques do not modify the network architectures, thereby enabling the formal verification of accuracy loss in quantization through existing differential verification tools. Conversely, with existing network pruning techniques, empirical evaluation of accuracy loss predominates, lacking formal guarantees. Unfortunately, this scenario remains outside the purview of existing differential verification tools as they have yet to extend themselves to accommodate such cases.
\subsection{Neural Network Verification}
Neural network verification employs formal verification techniques to mathematically prove the properties of neural networks. In this approach, a deep neural network is considered a mapping function from input to output, and the property is formalized as a logical formulation involving the inputs and outputs. Formally, neural network verification is defined as follows:
\begin{definition}[Neural Network Verification]\label{nnv}
Let $f: I\rightarrow O$ be a neural network and consider two predicates $P$ and $Q$. The verification problem $(f,P,Q)$, pertaining to verifying $f$ with respect to $P$ and $Q$, is to determine whether
\begin{equation*}
    \forall \mathbf{x}\in I, P(\mathbf{x})\Rightarrow Q(f(\mathbf{x})).
\end{equation*}
holds. Alternatively, the goal is to demonstrate that:
\begin{equation*}
    P(\mathbf{x})\wedge \neg Q(f(\mathbf{x}))
\end{equation*}
is unsatisfiable (UNSAT) or not. In instance where it is satisfiable (SAT), a counterexample $\mathbf{x}_{adv}$ can be found, such that $P(\mathbf{x}_{adv})$ holds true and $Q(f(\mathbf{x}_{adv}))$ does not.
\end{definition}
Note that, in general scenarios, the input property 
$P$ is often interpreted as a region of interest within the input space, while $Q$ typically specifies desired behaviors or constraints on the output. For example, $Q$ might impose a condition such as a specific dimension of the output always being the largest. Neural network verification ensures the absence of defects or adherence to specific properties.  Various strategies can be employed within the existing modeling framework to verify this property description.

\subsection{Satisfiability Modulo Theories Solving}

Satisfiability Modulo Theories (SMT) solving involves determining the satisfiability of logical formulas that incorporate constraints from various mathematical theories, such as arithmetic, bit-vectors, arrays, and data structures~\cite{DBLP:series/faia/BarrettSST09}. Diverging from the scope of satisfiability (SAT) solving, which addresses formulas expressed solely through Boolean variables, SMT solving broadens the ambit to encompass richer theories, rendering it apt for modeling and discerning intricate systems. The efficacy of SMT solving hinges upon the amalgamation of methodologies drawn from mathematical logic, computer science, and formal methods. Diverse algorithmic paradigms have been devised to expedite the resolution of SMT instances. These methodologies typically entail a fusion of SAT solving techniques with theory-specific decision procedures. Notably, CDCL (Conflict-Driven Clause Learning) is a key advancement in SAT solvers~\cite{DBLP:series/faia/0001LM21}, enhancing their efficiency and capability to tackle complex instances. In a CDCL solver, when a conflict arises (i.e., the current assignment of variables leads to a contradiction), the solver analyzes the conflict to learn a new clause that prevents the same assignment from recurring. This process is crucial for pruning the search space, allowing the solver to explore more promising areas of the solution space.

The utility of SMT solving transcends numerous domains. Confronted with the rapid development of deep learning, SMT solving has found application in the realm of neural network verification, such as checking the correctness of neural networks with respect to specified safety properties or invariants. Despite the temporal requirements associated with SMT-based neural network verification methods, they provide formal guarantees in a sound and complete manner. In this paper, we adopt linear arithmetic as our backend theory.

Concerning the verification procedure, conventionally, the negation of the desired property is encoded into the SMT formulas. Consequently, an UNSAT result is yielded if the problem does not admit any solution, thereby ensuring the property $Q$ of $f$ under any input that satisfies $P$. Conversely, a SAT outcome is obtained upon discovering a solution for the provided formulas. Furthermore, an UNK outcome is returned if no solution is found within a specified time frame while the problem is not entirely checked.

\section{Problem Definition and Overview}\label{overview}
\subsection{Problem Definition}
The incremental verification problem in DNN compression is defined as follows:
\begin{definition}[Incremental Verification for Compressed Neural Network]
    Let $f: I\rightarrow O$ be a neural network, and two predicates $P$ and $Q$. The verification problem $(f,P,Q)$ is first proven with its proof denoted as $p(f,P,Q)$. Consider the compressed neural network $f'$ of $f$. The incremental verification problem for compressed neural network $f'$ is defined as the verification problem $(f',P,Q)$ using the proof $p(f,P,Q)$.
\end{definition}
\subsection{Overview}
The MUC-G4 framework proposed in this paper aims to address the incremental verification problem during the compression of deep neural networks. 
The workflow of our framework is depicted in Fig. \ref{fig:framework}. 
\begin{figure}[t]
    \centering
    \includegraphics[width=0.7\linewidth]{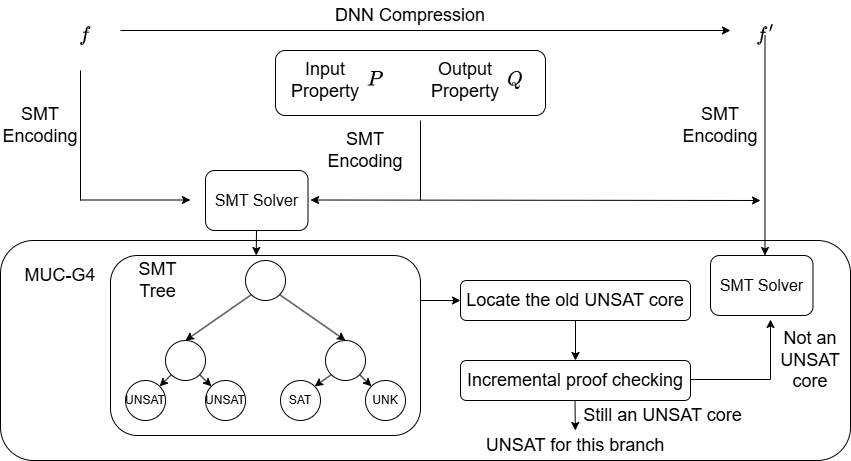}
    \caption{The framework of incremental verification in DNN compression using MUC-G4.}
    \label{fig:framework}
\end{figure}
Given a verification problem $(f,P,Q)$, we initially encode it into SMT formulas. The SMT solver then processes these formulas, generating an SMT search tree. Taking the neural network $f$ in Fig. \ref{fig:DNN} as an example, Fig. \ref{fig:SMT Tree} shows the corresponding SMT search tree. 
Starting from the root node, the two edges of each internal node are respectively labeled with two different activation state assertions of a certain neuron $x_{k,i}$ in $f$:  $x_{i,j}\geq0\wedge \hat{x}_{i,j}=x_{i,j}$ and $x_{i,j}<0\wedge \hat{x}_{i,j}=0$. These two assertions represent the activated and deactivated states of the neuron, respectively. Each path from the root node to a leaf node in the search tree corresponds to a combination of neuron activation states. 
%The SMT formulas formed by these paths limit the behavior of uncertain ReLU neurons on that path, determining whether they are activated or deactivated. 
The leaf nodes are marked with the verification results obtained by solving with the SMT solver, including SAT, UNSAT, or UNK. 
\begin{figure}
    \centering
    \includegraphics[width=0.9\linewidth]{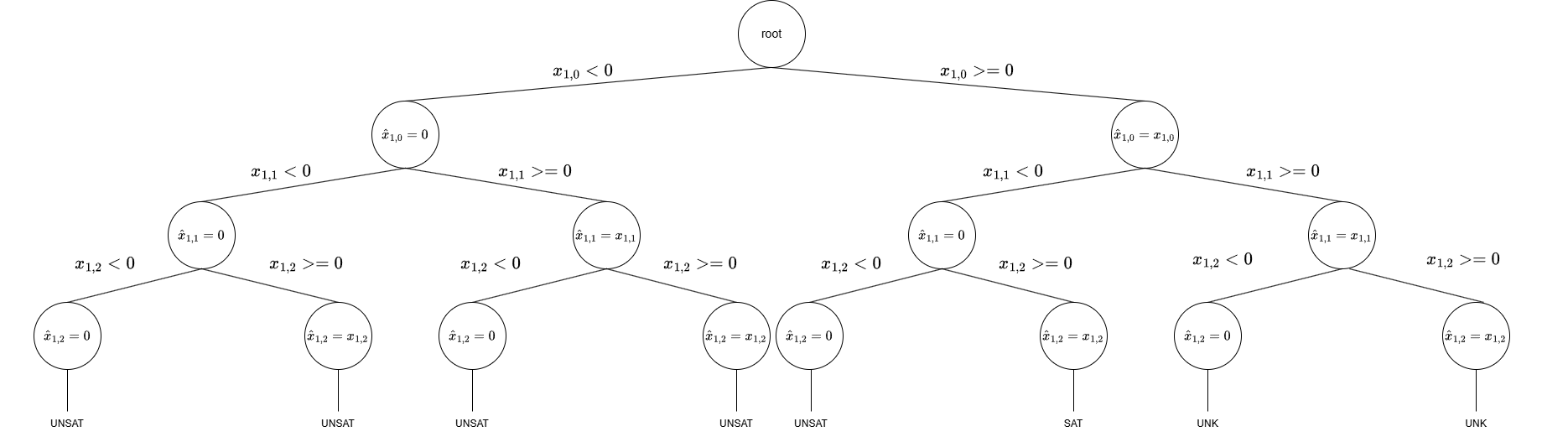}
    \caption{The SMT tree conducted using $f$ in Fig. \ref{fig:DNN}.}
    \label{fig:SMT Tree}
\end{figure}

We now explain how to transform the SMT tree into the proof in Fig. \ref{fig:SMT Proof}. 
\begin{figure}
    \centering
    \includegraphics[width=0.7\linewidth]{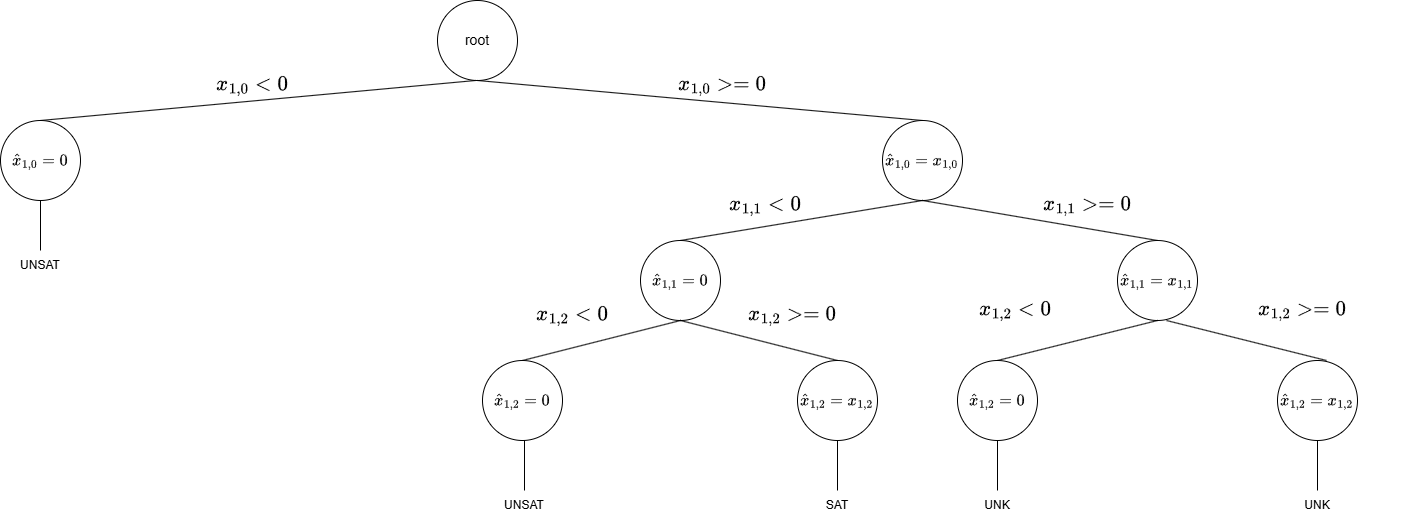}
    \caption{The SMT proof conducted with Fig. \ref{fig:SMT Tree}.}
    \label{fig:SMT Proof}
\end{figure}
We start by traversing the SMT tree in Fig. \ref{fig:SMT Tree} from the root to the leaf nodes. For each path that leads to a leaf node marked as UNSAT, we record the sequence of neuron activation state assertions along that path. These assertions are the ones labeling the edges of the tree. Next, we simplify the recorded assertions for each UNSAT path. We look for redundant assertions within the set of assertions that lead to the UNSAT result. For example in Fig. \ref{fig:SMT Tree}, we initially discover that $x_{1,0}<0$ and $x_{1,1}<0$ and the root are sufficient to obtain an UNSAT result, meaning the status of $x_{1,2}$ is not necessary to cause the overall contradiction in the path. Thus we remove it from this path and we continue to conduct the removal. Eventually we obtain the path containing only $x_{1,0}<0$ as in Fig. \ref{fig:SMT Proof}. After constructing the proof in Fig. \ref{fig:SMT Proof}, we check the Minimal Unsat Cores for the compressed network $f'$ to see if the cores remain UNSAT after the compression. By doing so, we prune the search space and guide the search for the verification of $f'$.

\section{SMT Formula Solving-Based Incremental Verification}\label{approach}
% In this section, we describe our approach for incrementally verifying compressed network efficiently and accurately. We first present the construction of the useful proof within the verification of the original network. Then we introduce the strategies on reusing the proof that can reduce the SMT assertions needed to check and increase scalability without harming soundness and completeness.

\subsection{Basic Algorithm}
Given a neural network verification problem $(f,P,Q)$, in order to solve it with SMT solvers, it is needed to encode the problem as SMT formulas. The encoding involves determining the variables and the constraints in the formulas. 
\subsubsection{\textbf{Variables}}
Using the notations in previous section, the variables in the equivalent SMT formulas of $f$ with an input layer, $L-1$ hidden layers and an output layer are constructed using the input and the output of the neurons in $f$. The input of $f$ can be naturally encoded into $n_0$ input variables $\hat{x}_{0,1},...,\hat{x}_{0,n_0}$. Similarly, the output of $f$ can be encoded into $n_L$ output variables $x_{L,1},...,x_{L,n_L}$. Besides the input and output variables of $f$, its intermediate neurons should also be encoded as variables. Given hidden layer $i$ with $0<i<L$, the associated variables of its input are denoted as $x_{i,1},...x_{i,n_i}$, and the associated variables of its output are denoted as $\hat{x}_{i,1},...\hat{x}_{i,n_i}$.
\subsubsection{\textbf{Constraints}}
The constraints in the formula can also be categorized into input constraints (referred to as $P$ in Definition \ref{nnv}), output constraints (referred to as $Q$ in Definition \ref{nnv}) and intermediate constraints. 

The inputs are often constrained to lie between given lower and upper bounds. The input constraints are as follows:
\begin{equation}
    \bigwedge^{n_0}_{j=1} l_j\leq \hat{x}_{0,j}\leq u_j
\end{equation}
, where $l_j,u_j\in\mathbb{R}$ denote the lower and upper bounds for the input feature $\hat{x}_{n_0,j}$. As for the intermediate neurons, the constraints are two folds. On the one hand, for any $i\in[1,L]$ there exist constraints encoding the affine transformation:
\begin{equation}
    \bigwedge^{n_i}_{j=1}(x_{i,j}=\Sigma^{n_{i-1}}_{k=1}\hat{x}_{i-1,k}W_i[k,j]+b_{i,j})
\end{equation}
. On the other hand, for any $i\in[1,L-1]$, the ReLU activation function can be encoded as:
\begin{equation}
    \bigwedge^{n_i}_{j=1}(x_{i,j}\geq0\wedge \hat{x}_{i,j}=x_{i,j})\vee (x_{i,j}<0\wedge \hat{x}_{i,j}=0)
\end{equation}

When it comes to the constraints on the output variables, the form is usually dependent on the exact output property. The output constraints are often constructed according to the negation of $Q$ in the verification problem $(f,P,Q)$, which we refer to as $\neg Q$. Hence, solving the conjunction of these SMT formulas and $\neg Q$ is equivalent to resolving the verification problem of $f$.
\subsection{Critical Proof Components}
\subsubsection{\textbf{unsat core and Minimal Unsat Core}}
During the verification, when the SMT solver determines that a set of formulas is unsatisfiable, it means that there is no assignment of variables that can satisfy all the constraints simultaneously. This situation often occurs due to conflicts among different parts of the neural network's structure, activation functions, input, and output constraints. To better analyze and manage these unsatisfiable cases, we introduce the concept of the unsat core.

\begin{definition}[Unsat Core]
    Given a set of assertions (constraints) $S$ in an SMT problem, the Unsat Core is a subset $C \subseteq S$ such that the conjunction of all assertions in $C$ is unsatisfiable.
\end{definition}

However, an unsat core may contain redundant information. In particular, given two unsat cores $C_1$ and $C_2$ with $C_1\subseteq C_2$, checking $C_1$ uses less time than $C_2$ since it is shorter. Thus, to obtain a more concise and essential representation of the unsatisfiable part, we further define the Minimal Unsat Core, also denoted as MUC in the following sections. 

\begin{definition}[Minimal Unsat Core]
    Let $C$ be an unsat core. The Minimal Unsat Core $C_{\text{min}}$ is a subset of $C$ such that:
    \begin{enumerate}
        \item $C_{\text{min}}$ is unsatisfiable.
        \item There is no proper subset $C' \subsetneq C_{\text{min}}$ such that $C'$is unsatisfiable.
    \end{enumerate}

\end{definition}

\subsubsection{\textbf{Counterexample and SAT Core}}
In some cases, neural network verification returns a counterexample. A counterexample in neural network verification is a specific input within the defined input domain that violates the desired property being verified. 
When dealing with incremental verification in the context of neural network compression, understanding the behavior of counterexamples becomes crucial. As the network is compressed, the relationships between neurons and the overall behavior of the network change. The counterexamples from the original network may not directly translate to counterexamples in the compressed network. However, they can still provide valuable insights. This is where the concept of the SAT Core comes into play. Expanding the search to the SAT Core is essential because it encompasses both the counterexample and the path containing it that led to the SAT result. SAT core is defined as follow:
\begin{definition}[SAT Core]
    Given an SMT problem, its SAT core is a tuple $( C_{ex}, P_{SAT})$, where  $C_{ex}$ represents its counterexample and $P_{SAT}$ its corresponding path whose verification results is SAT.
\end{definition}

\subsection{Optimization}
% Neural network verification becomes increasingly challenging as the network size grows, particularly due to the presence of a large number of unstable neurons. Traditional methods of handling these neurons, such as directly splitting disjunctions, lead to exponential growth in running time. This is because each disjunctive term represents a different behavior scenario of the neuron, and the number of possible combinations of these scenarios increases exponentially with the number of unstable neurons. To address this computational bottleneck, we implement linear relaxation technique to increase the overall scalability of SMT solving and a heuristic way to decide order of ``unrelaxing'' neurons to restore the completeness violated by linear relaxation. 

\subsubsection{\textbf{Linear Relaxation}}
Notably, splitting the disjunction in (3) for the unstable neurons can lead to an exponential explosion of running time. When splitting the disjunction in (3) for the unstable neurons, we are dealing with multiple possible cases or branches that arise from the logical disjunction. Each disjunctive term represents a different condition or scenario that the neuron's behavior might satisfy. As the number of unstable neurons increases, the number of possible combinations of these different conditions grows exponentially. This rapid growth in the number of combinations to be evaluated leads to an exponential explosion in the running time, making the analysis computationally intractable as the size of the neural network (i.e., the number of unstable neurons) becomes larger.

In order to avoid such an inefficiency, We adopt the linear relaxation techniques and rewrite the formula (3) for unstable neurons as:

\begin{equation}
\bigwedge_{j=1}^{n_i} \hat{x}_{i,j} \geq 0 \wedge \hat{x}_{i,j} \geq x_{i,j} \wedge \hat{x}_{i,j} \leq \frac{u_{i,j}}{u_{i,j} - l_{i,j}} (x_{i,j}-l_{i,j}) 
\end{equation}
Here, $u_{i,j}$ and $l_{i,j}$ are the upper and the lower bounds computed for the $j-th$ neuron in the $i-th$ layer.

Notably, for each unstable neuron, the rewritten formula effectively constrains the output $\hat{x}_{i,j}$ by establishing a set of linear inequalities that capture the relationships between the unstable neuron activations and their corresponding bounds. By establishing these linear inequalities, we avoid having to enumerate and analyze each individual disjunctive case separately. The inequalities work together to define a region in the activation space that over-approximates the original neuron behavior. This is summarized in the following lemma:
\begin{lemma}
    Let $S_{original}$ be the region in the activation space defined by formula (3). Let $S_{relaxed}$ be the region in the activation space defined by the linear inequalities established through formula (4). Then, $S_{original}\subseteq S_{relaxed}$, meaning that the region $S_{relaxed}$ is an over-approximation of the region $S_{original}$.
\end{lemma}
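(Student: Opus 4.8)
The plan is to prove the set inclusion $S_{original} \subseteq S_{relaxed}$ directly: I would take an arbitrary point $(x_{i,j}, \hat{x}_{i,j})$ satisfying formula (3) and verify that it satisfies each of the three linear inequalities defining $S_{relaxed}$ in formula (4). Since formula (3) is the disjunction of the active and inactive ReLU cases, the natural structure is a case split on these two disjuncts. Throughout, I would make explicit use of the standing assumption that the neuron is \emph{unstable}, so that $l_{i,j} < 0 < u_{i,j}$; this guarantees the denominator $u_{i,j} - l_{i,j} > 0$ and makes the relaxation slope $\frac{u_{i,j}}{u_{i,j} - l_{i,j}}$ strictly positive and well defined, together with the fact that $u_{i,j}$ and $l_{i,j}$ are valid bounds, i.e. $l_{i,j} \leq x_{i,j} \leq u_{i,j}$.

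In the active case, where $x_{i,j} \geq 0$ and $\hat{x}_{i,j} = x_{i,j}$, the first two inequalities $\hat{x}_{i,j} \geq 0$ and $\hat{x}_{i,j} \geq x_{i,j}$ hold immediately (the latter as an equality). The third inequality, after substituting $\hat{x}_{i,j} = x_{i,j}$, reduces to showing $x_{i,j} \leq \frac{u_{i,j}}{u_{i,j} - l_{i,j}}(x_{i,j} - l_{i,j})$. Clearing the positive denominator and simplifying, this is equivalent to $l_{i,j}(x_{i,j} - u_{i,j}) \geq 0$, which holds because $l_{i,j} < 0$ and $x_{i,j} \leq u_{i,j}$ force a product of a negative and a nonpositive factor. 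In the inactive case, where $x_{i,j} < 0$ and $\hat{x}_{i,j} = 0$, the inequalities $\hat{x}_{i,j} \geq 0$ and $\hat{x}_{i,j} \geq x_{i,j}$ are trivial; the third reduces to $0 \leq \frac{u_{i,j}}{u_{i,j} - l_{i,j}}(x_{i,j} - l_{i,j})$, which follows from positivity of the slope together with $x_{i,j} \geq l_{i,j}$. Since both disjuncts land inside $S_{relaxed}$, the inclusion follows.

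I expect the only genuinely non-trivial step to be the upper-bounding inequality in the active case, where one must invoke the validity of the precomputed bound $x_{i,j} \leq u_{i,j}$ and carefully track signs after clearing the denominator; everything else is immediate from the case hypotheses and from $u_{i,j} - l_{i,j} > 0$. I would therefore be careful to state the reliance on $l_{i,j} < 0 < u_{i,j}$ explicitly, since without instability the denominator could vanish or change sign and the reduction above would break down.
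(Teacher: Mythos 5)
Your proof is correct, and it is in essence a rigorous, fully worked-out version of the paper's own argument: the paper's proof consists of the single geometric observation that $S_{original}$ is exactly the two lower edges of the triangular region $S_{relaxed}$, which is precisely the picture your case-split algebraic verification establishes. Your version has the added merit of making explicit the instability hypothesis $l_{i,j} < 0 < u_{i,j}$ (needed so the slope $\frac{u_{i,j}}{u_{i,j}-l_{i,j}}$ is well defined and positive) and the validity of the precomputed bounds, both of which the paper leaves implicit.
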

\begin{proof}
    $S_{original}$ is the two boundaries of the triangular region of $S_{relaxed}$.
\end{proof}

Since we adopt linear relaxation in our solving procedure, we summarize in the following lemma about the correctness of the computed unsat cores:
\begin{lemma}
    Given a group of SMT formulas $S_1$ encoded from a neural network verification problem, and $S_2$ being the group of SMT formulas where some of the ReLU constraints in $S_1$ are replaced with their linear relaxation, if $C_1$ is an unsat core of $S_2$, then $C_1$ is an unsat core of $S_1$.
\end{lemma}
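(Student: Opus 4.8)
The plan is to exploit the over-approximation established in the preceding lemma, which tells us that for each relaxed neuron the relaxed ReLU constraint (formula (4)) is logically implied by the exact ReLU constraint (formula (3)): every assignment satisfying the exact constraint also satisfies the relaxation, since $S_{original}\subseteq S_{relaxed}$. Because unsatisfiability is an intrinsic property of a conjunction of assertions, the only real work is to relate the assertions of $C_1$, which live in $S_2$, to assertions that live in $S_1$.

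First I would partition the two problems according to how they differ. Writing $S_{common}$ for the assertions shared by both (the input bounds, the affine equalities, the output constraint $\neg Q$, and the exact ReLU constraints of the neurons that were \emph{not} relaxed), we have $S_1 = S_{common}\cup E$ and $S_2 = S_{common}\cup R$, where $E$ collects the exact constraints and $R$ the relaxed constraints of the relaxed neurons, paired neuron-by-neuron. By the over-approximation lemma, for each such neuron the exact assertion $e\in E$ entails its relaxed counterpart $r\in R$.

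Next I would transform the core. Given $C_1\subseteq S_2$ unsatisfiable, split it as $C_1 = C_1^{com}\cup C_1^{rel}$ with $C_1^{com}\subseteq S_{common}$ and $C_1^{rel}\subseteq R$. Replacing every relaxed assertion in $C_1^{rel}$ by its exact counterpart in $E$ yields $C_1' = C_1^{com}\cup C_1^{ex}\subseteq S_1$. Since each exact assertion entails the corresponding relaxed one, the conjunction $\bigwedge C_1'$ entails $\bigwedge C_1$; as $\bigwedge C_1$ is unsatisfiable, so is $\bigwedge C_1'$. Hence $C_1'$ is a subset of $S_1$ whose conjunction is unsatisfiable, i.e. an unsat core of $S_1$. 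In the special case $C_1^{rel}=\emptyset$ we have $C_1'=C_1\subseteq S_1$, giving the statement verbatim.

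The main obstacle is conceptual rather than computational: one must get the logical direction right. A relaxed assertion appearing in $C_1$ is strictly not a member of $S_1$, so the transfer cannot be a bare subset inclusion; it must be mediated by replacing each relaxed assertion with the stronger exact one and verifying that this strengthening preserves unsatisfiability. This is precisely where the inclusion $S_{original}\subseteq S_{relaxed}$ is invoked, and it is the only point at which the relaxation enters the argument.
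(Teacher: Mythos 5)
Your proof is correct and is essentially the paper's own argument: the paper disposes of this lemma with the single line ``directly induced by lemma 1,'' i.e.\ exactly the entailment you spell out, that each exact ReLU constraint implies its relaxation because $S_{original}\subseteq S_{relaxed}$, so strengthening a core's assertions preserves unsatisfiability. Your additional observation --- that a core $C_1$ containing relaxed assertions is not literally a subset of $S_1$, so the statement holds verbatim only when $C_1^{rel}=\emptyset$ (as for the cores of activation-state assertions the framework actually extracts) and otherwise only for the exact-counterpart core $C_1'$ --- is a clarification the paper leaves implicit rather than a departure from its approach.
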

\begin{proof}
    This is directly induced by lemma 1.
\end{proof}

This over-approximation allows us to reason about the neuron's behavior in a more holistic and computationally efficient manner, without getting bogged down in the combinatorial explosion that would result from directly splitting the disjunction. In essence, it provides a more efficient way to bound the possible behaviors of the unstable neurons, thus maintaining a balance between comprehensively capturing their behavior and ensuring computational feasibility.

\subsubsection{\textbf{Heuristic Neuron Selection}}
It is worth noting that linear relaxation alone improves the scalability of the neural network verification process at the cost of the completeness. That is, the process might gives false alert in forms of spurious counterexamples because of the over-approximation brought by the linear relaxation of neurons. To avoid the violation of completeness of the overall verification process, it is necessary to refine the whole SMT formulas in such way that the over-approximation become tighter. A straightforward way is to choose a neuron of index $(i,j)$ that is overestimated during the linear relaxation from (3) to (4) and restore it backwards by add the following formulas to (3):
\begin{equation}
(x_{i,j}\geq0\Rightarrow\hat{x}_{i,j}\leq x_{i,j})\wedge(x_{i,j}<0\Rightarrow \hat{x}_{i,j}\leq0)
\end{equation}

Intuitively, the order to select the neurons restore the completeness is important because each refinement operation decreases the overall scalability. Therefore, the aim is to select the neurons that might be ''more over-estimated'' than the other neurons during the linear relaxation. Hence, a possible heuristic optimization is that we score each unstable neuron as follows:
\begin{equation}
    Score(neuron) = \frac{|u+l|}{u+|l|}
\end{equation}
Here, $u$ and $l$ are the upper and the lower bounds of such neuron. According to this scoring order, we assign values to the variables of the SAT solver to determine the state of the neurons and solve the corresponding subproblem. The intuition behind such heuristic variable selection is that split neurons with balanced positive part and negative part can generate a tighter subproblem that over-approximate the original one. Similarly, we guide the conventional backtracking in SMT solving with the same score.
\subsubsection{\textbf{Verification Guidance with On-the-fly MUC Extraction}}
Instead of naively computing MUCs after the completion of the verification, we conduct the MUC computation on-the-fly each time we find an infeasible path. By doing so, we discover that the MUC of a certain path can guide the search of the verification of the remaining paths by ignoring the paths that contain this MUC. 

\subsection{MUC Guided Incremental Verification for Compression}

\subsubsection{\textbf{SMT Formulas in Network Quantization}}
Consider that $f'$ is quantized out of $f$. In this case, two assumptions naturally arise. First is that the network structures are the same, thus the variables in the corresponding SMT formulas for $f'$ is organized the same way in $f$. Second is that $f'$ only slightly differs in the weights and bias in the affine transformation. Hence, given $(f,P,Q)$, the SMT formulas conducted from $(f',P,Q)$ only differ in the affine transformation:
\begin{equation}
    \bigwedge^{n_i}_{j=1}(x_{i,j}=\Sigma^{n_{i-1}}_{k=1}\hat{x}_{i-1,k}W'_i[k,j]+b'_{i,j})
\end{equation}

Notably, the MUCs in the original network $f$ are based on the fundamental behavior of the network, which includes the impact of weights and biases on neuron activations. Although the values have changed slightly in $f'$, the overall patterns of how these parameters interact with neuron activations are likely to be similar. For instance, if in $f$ a certain group of weight values cause a neuron to be activated in a way that led to a conflict in the MUC, in $f'$, the quantized weight may still have a comparable effect on the neuron's activation, and thus, the MUC can provide valuable insights into how this change might affect the overall satisfiability of the network.

\subsubsection{\textbf{SMT Formulas in Network Pruning}}

As for the case where $f'$ is pruned out of $f$, structural changes are made while parameters are assumed to be the same. Note that node pruning is indeed a special case of edge pruning. Hence, in this case, we construct the SMT formulas using an indicator function $\mathbf{1}$:
\begin{equation}
    \bigwedge^{n_i}_{j=1}(x_{i,j}=\Sigma^{n_{i-1}}_{k=1}\hat{x}_{i-1,k}W_i[k,j](1-\mathbf{1})+b_{i,j}
\end{equation}
, where $\mathbf{1}$ is defined as:
\begin{align}
    \mathbf{1}_{i,k,j} = 
\begin{cases} 
1 & \text{if the edge from neuron $i$ in layer $k-1$}\\
&\text{ to neuron $j$ in layer $k$ is pruned} , \\
0 & \text{otherwise }.
\end{cases} 
\end{align}

Unlike heuristic techniques in other incremental verification research, MUC guidance is worth using in the structurally changed compressed network. The reasons are as follows. Even after pruning, a significant portion of the original network structure remains intact. The MUCs from the initial verification were derived from the overall behavior of the network, including the relationships between neurons that might be still present in the pruned network. Besides, although pruning changes the network structure, the MUCs capture the fundamental patterns of unsatisfiability based on neuron activations and constraints. These patterns can still hold true for the pruned network, especially if the pruning is done in a way that does not disrupt the main logical flow of the network.

\subsubsection{\textbf{MUC Guided Incremental Verification}}
We further extend the idea of MUC guided verification to the verification of the compressed network. The algorithm is illustrated in Algorithm \ref{algo:main} in Appendix \ref{algo}. If the old verification result is UNSAT, we simply incrementally check whether the old Minimal Unsat Cores still hold for $f'$, and for an old Minimal Unsat Core that cannot be immediately proved, we start an SMT solving that check the formulas containing this core. If the old verification result is SAT, we first locate ourselves in the SAT path and see whether there exist a counterexample for the compressed network. If not, we go through the unchecked path in the old solving, from the nearest to the farthest path. If still there is no counterexample, we go back to the old UNSAT path and check the old Minimal Unsat Core within.

\section{Implementation and Evaluation} \label{implement}
% We have developed MUC-G4, a framework based on SMT solvers, for verifying neural network compression in an incremental manner. To test its speed improvement, we compared it to the original version. The input for MUC-G4 is a compressed DNN ($f'$) in pytorch format, along with the previous proof of the original DNN ($f$). Our implementation also includes modules that can generate unsat cores or counterexamples for the original network, as explained in Section \ref{approach}. Using these cores and the SMT tree of the original network\ref{approach}, we generate a proof. Additionally, we utilize the z3 solver to incrementally check the cores for the compressed network. The process concludes when the z3 solver either returns a valid counterexample for the compressed network or UNSAT.

\subsection{Benchmarks}

\noindent{\textbf{ACAS-Xu}}
The ACAS Xu benchmarks contain 45 neural networks for testing flight control system safety, which is crucial for aviation safety research and engineering design. Each network has an input layer with 5 inputs, an output layer with 5 outputs, and 6 hidden layers with 300 hidden neurons in total. The five inputs are as follows: distance from the owner to the intruder ($\rho$), heading direction angle of the intruder relative to the owner ($\theta$), speed of the owner ($v_{own}$), and speed of the intruder ($v_{int}$).  The five outputs are: clear of conflict (COC), weak left turn at an angle of $1.5^o/s$ ($weak\ left$), weak right turn at an angle of $1.5^o/s$ ($weak\ right$), strong left turn at an angle of $3.0^o/s$ ($strong\ left$), and strong right turn at an angle of $3.0^o/s$ ($strong\ right$). Each output of the neural network in ACAS Xu corresponds to the score given for a specific action. 

\smallskip\noindent{\textbf{MNIST}}
The MNIST dataset is a collection of labeled images that depict handwritten digits, and it is widely used as a standard benchmark for evaluating the performance of image classification algorithms. Each image in this dataset is composed of 28x28 pixels, with each pixel containing a grayscale value within the range of 0 to 255. When neural networks are trained using this dataset, they are designed to process 784 inputs, with each input representing a pixel's grayscale value falling within the range of 0 to 255. The objective of these neural networks is to produce 10 scores, typically ranging between -10 and 10 in the case of our specific network, corresponding to each of the 10 digits. Subsequently, the digit associated with the highest score is selected as the classification outcome.

\subsection{Experimental Evaluation}
We aim to answer the following research questions:
\begin{enumerate}
\item Is MUC-G4 capable of generating proofs that are ``robust'' in network compression scenarios? Specifically, what is the ratio of the generated unsatisfiable cores that are valid within the context of the compressed network?

\item Is MUC-G4 capable of generating proofs that are critical in network compression scenarios? Specifically, what is the impact on overall scalability?
\end{enumerate}

Consequently, we propose the notion of proof validity ($PV$) to describe how ``robust'' the generated unsat cores are to the compressed network:
\begin{equation}
    Validity(p(f))=\frac{p(f)\cap unsat\_core(f')}{p(f)}
\end{equation}
, where $p(f)$ denotes the proof generated for $f$. It calculates how many unsat cores of $f$ are also unsat cores of $f'$.
Our experiments also compute the proof acceleration ratio to show the speed up in time cost. The input for MUC-G4 is a compressed DNN ($f'$) in pytorch format, along with the previous proof of the original DNN ($f$). Our implementation also includes modules that can generate unsat cores or counterexamples for the original network, as explained in Section \ref{approach}. Besides, the vanilla SMT approach in our paper refers to the SAT+LP approach. The experiments are conducted on a Linux server running Ubuntu 18.04 with an Intel Xeon CPU E5-2678 v3 featuring 12 CPUs.

\subsection{Results}
\subsubsection{\textbf{ACAS-Xu}}
We conducted experiments on ACAS Xu benchmarks to evaluate our proposed MUC-G4 framework on incremental verification task for neural network quantization. Our goal was to verify the variants of the 45 ACAS Xu DNNs used for airborne collision avoidance and evaluate the quality of the proofs generated by our approach. In order to take all 45 networks into comparison, we select the property $\phi_1$ and $phi_2$ in the benchmark. 

\begin{figure}[!t]
    \centering
    \begin{subfigure}[b]{0.45\textwidth}
        \includegraphics[width=\textwidth]{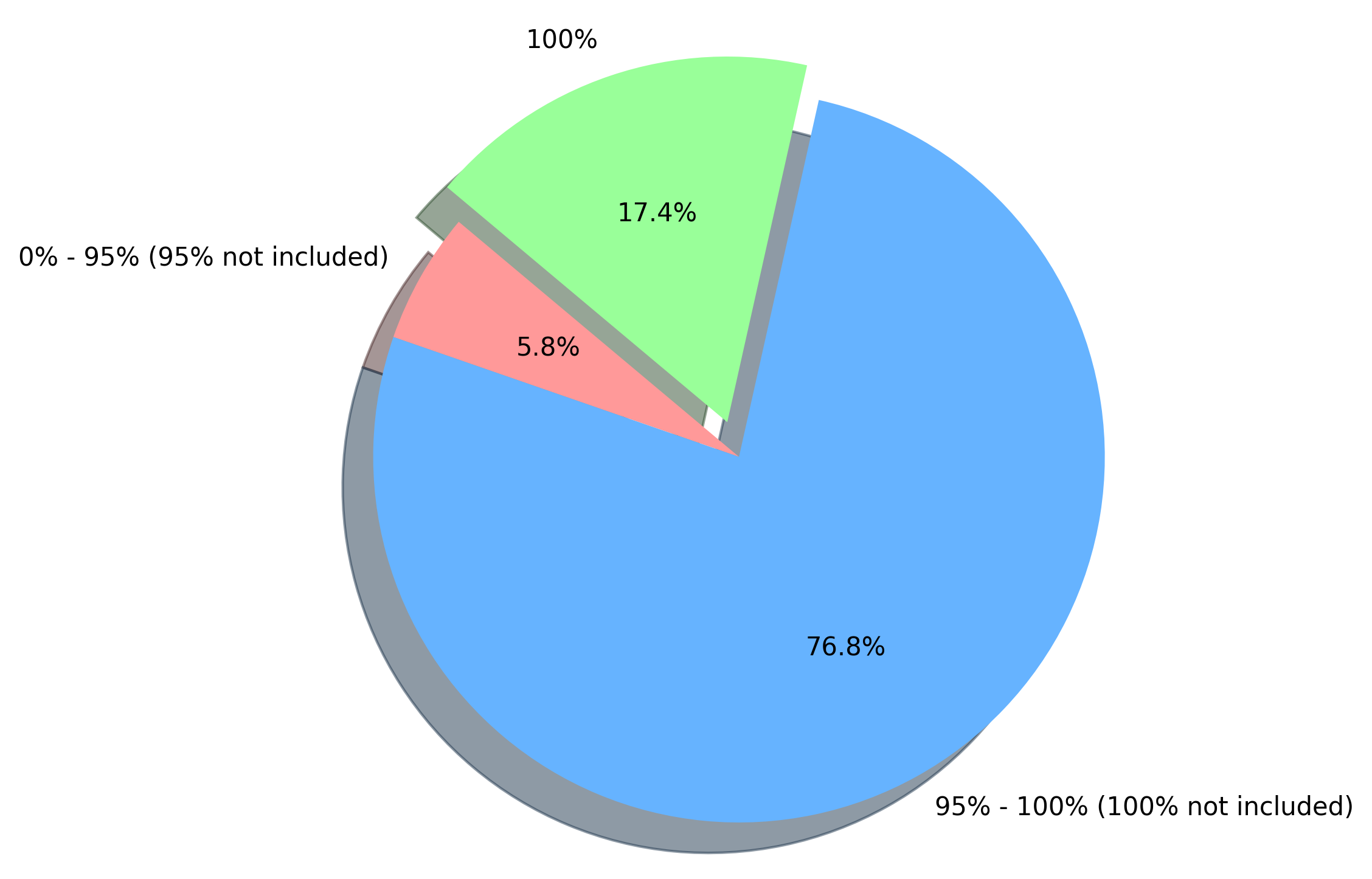}
        \caption{Distribution of valid proofs in ACAS Xu benchmark in quantization scenarios}
        \label{acasvalidity}
    \end{subfigure}
    \hfill
    \begin{subfigure}[b]{0.45\textwidth}
        \includegraphics[width=\textwidth]{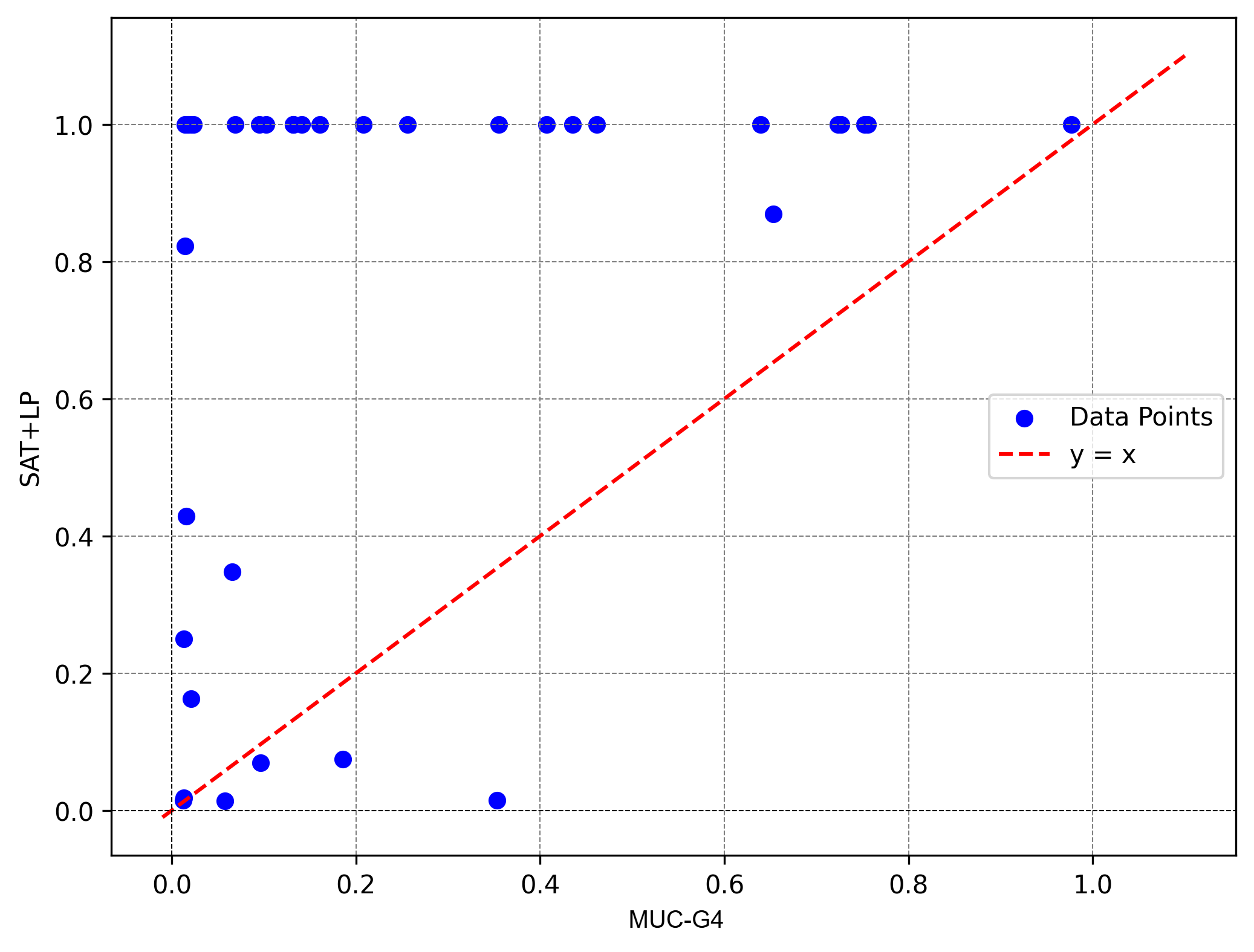}
        \caption{Running time comparison of vanilla SMT and MUC-G4 enhanced method on ACAS Xu benchmark.}
        \label{acastime}
    \end{subfigure}
    \caption{Empirical Evaluation on ACAS Xu Benchmark}
    \label{fig:combined}
\end{figure}

We conducted an evaluation of the validity of the generated proofs, i.e., the unsat cores in the original network verification. The results are presented in Fig.\ref{acasvalidity}. The high proportion (76.8\%) of unsat cores with validity between 95\% - 100\% (100\% not included) indicates that the MUC-G4 framework is generally effective in generating proofs that are highly reusable in the context of network compression for ACAS - Xu benchmarks. This suggests that in most cases, the generated unsat cores require minimal re - verification effort after network compression. The 17.4\% of unsat cores with 100\% validity are particularly significant, as they imply that for these cases, the generated proofs are directly applicable to the compressed networks without any modification, showcasing the robustness of the MUC-G4 approach. Besides, the 5.8\% of unsat cores with validity between 0 - 95\% (95\% not included) have no less than 90\% validity. Such results proves that most of the proofs that MUC-G4 provides can be reused in the incremental verification during network quantization.

We record the running time to evaluate the significance of the generated proofs. The results are illustrated in Fig.\ref{acastime}, which compares the total query-solving time for conventional SMT solving against MUC-G4. To facilitate interpretation of the results, we exclude instances where both methods exceed the time limit, and we compute the ratio of actual solving time to the time limit. From Fig.\ref{acastime}, it is evident that MUC-G4 achieves a degree of acceleration in the verification of compressed networks. Over 34.2\% of the previously unknown cases are resolved using MUC-G4. Additionally, the average speedup—calculated as the ratio of the verification time for conventional SMT solving to that of MUC-G4—is approximately 14.1 times. However, there are instances where MUC-G4 does not provide an acceleration, with the majority yielding a SAT result. Nevertheless, even in these cases, MUC-G4 demonstrates only minor performance degradation compared to the conventional SMT solving method.

% \begin{figure}[hbtp]
%     \centering
% \includegraphics[width=0.6\columnwidth]{Figure/acas_time.png}
%     \caption{Running time comparison of SMT and MUC-G4 enhanced method on ACAS Xu benchmark.}
%     \label{acastime}
% \end{figure}

\subsubsection{\textbf{MNIST}}
Based on the MNIST dataset, we trained 8 DNNs with architectures $2\times10$, $2\times15$, $2\times20$, $2\times50$, $2\times100$, $3\times15$, $6\times15$, $12\times15$, and prune their edges with a ratio of 0.2. Besides, we randomly sampled 4 data points with input perturbation $\epsilon=0.1$, yielding 32 incremental verification problems.

We conducted an evaluation of the validity of the generated proofs, i.e., the unsat cores in the original network verification. The results are presented in Fig.\ref{mnistvalidity}. The high proportion (47.2\%) of values within the range of 80\% - 100\% indicates that the MUC-G4 framework is generally effective in generating proofs that are highly valid in pruned network verification. This suggests that in most cases, the generated proofs can be reused after pruning.
The 16.7\% of values in the 0\% - 20\% range are significant as they imply that in these cases, the generated unsat cores are relatively not so robust. It could be due to the pruning of some crucial edges in the network.
The values in the ranges of 20\% - 40\% (11.1\%), 40\% - 60\% (13.9\%), and 60\% - 80\% (11.1\%) show a relatively balanced distribution. These intermediate-level values suggest that even with edge pruning, MUC-G4 can generate meaningful proofs.
Overall, such results prove that while a significant portion of the generated proofs by MUC-G4 are highly satisfactory, there are still cases where the proofs cannot be reused, indicating areas for potential improvement and further exploration on generating more ``robust'' proofs regarding network pruning.

\begin{figure}[t]
    \centering
    \begin{subfigure}[b]{0.45\textwidth}
        \includegraphics[width=\textwidth]{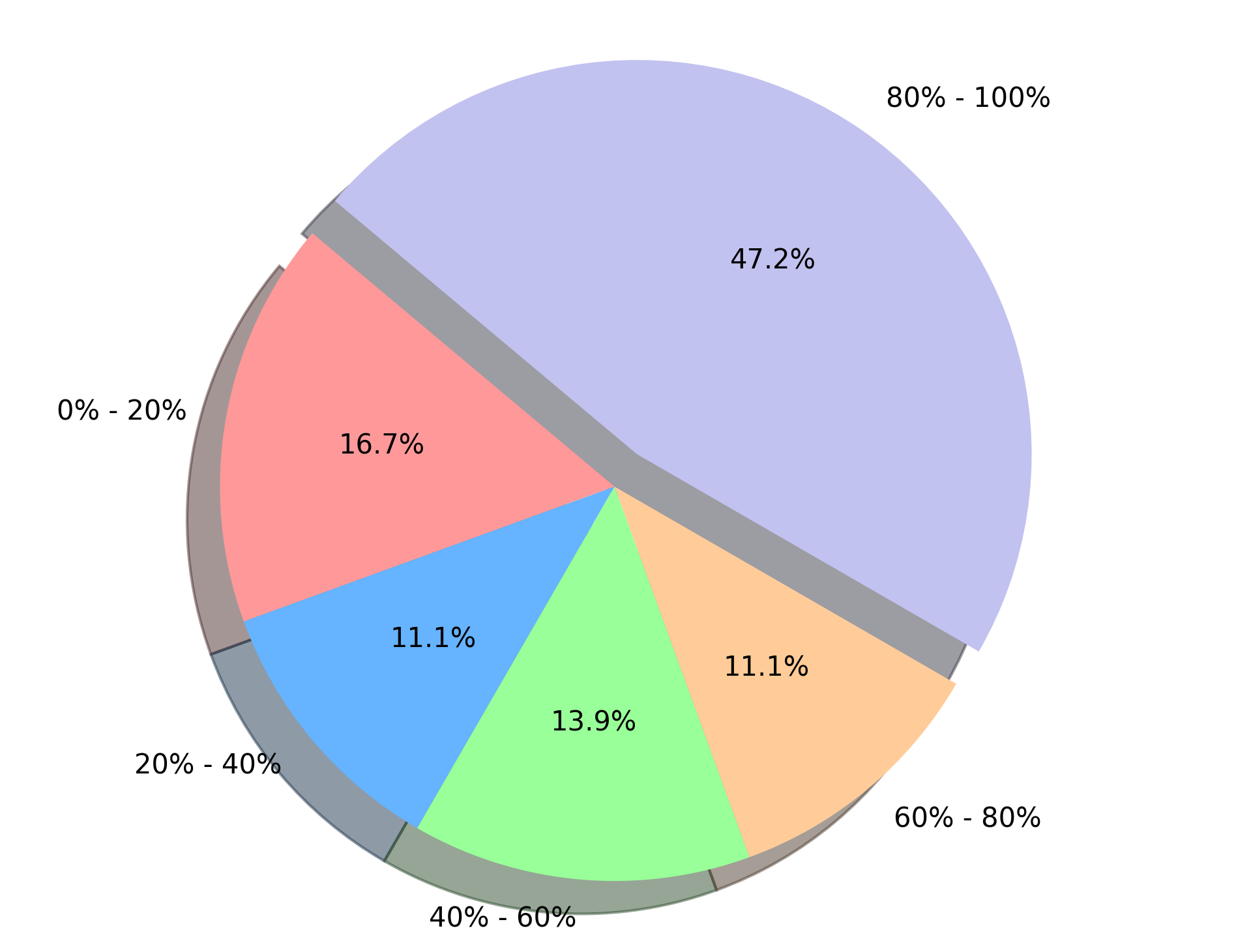}
        \caption{Distribution of valid proofs in MNIST Benchmark in Network Pruning Scenarios}
        \label{mnistvalidity}
    \end{subfigure}
    \hfill
    \begin{subfigure}[b]{0.45\textwidth}
        \includegraphics[width=\textwidth]{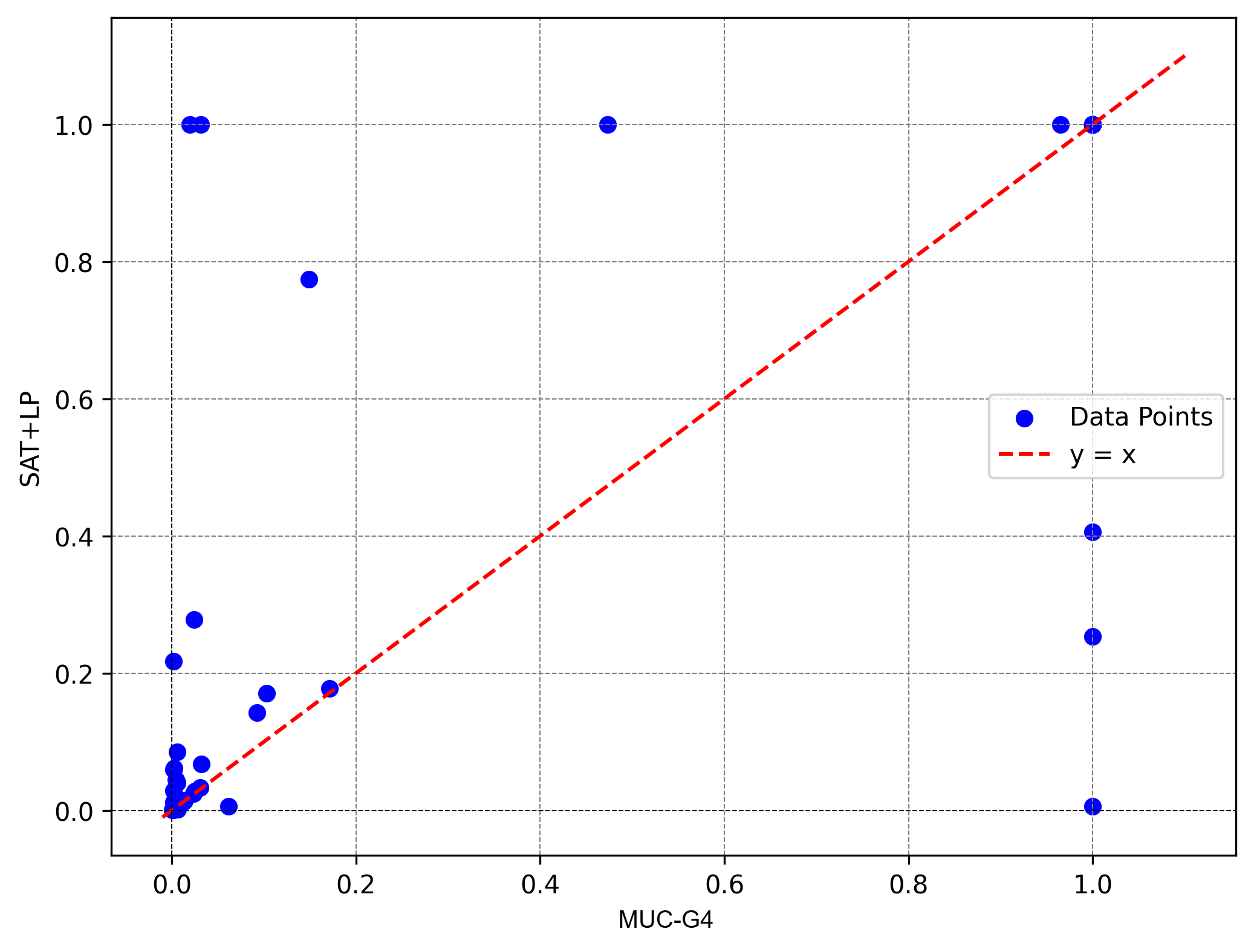}
        \caption{Running time comparison of vanilla SMT and MUC-G4 enhanced method on MNIST benchmark.}
        \label{mnisttime}
    \end{subfigure}
    \caption{Empirical Evaluation on MNIST Benchmark}
    \label{fig:mnist_combined}
\end{figure}
We record the running time for MNIST benchmark to evaluate the significance of the generated proofs in network pruning based incremental verification. 
The results are illustrated in Fig.\ref{mnisttime}, which compares the total query-solving time for conventional SMT solving against MUC-G4. 
To facilitate interpretation of the results, we exclude instances where both methods exceed the time limit, and we compute the ratio of actual solving time to the time limit. From Fig.\ref{mnisttime}, it is evident that MUC-G4 achieves acceleration in for the majority of pruned cases. Over 19.0\% of the previously unknown cases are resolved using MUC-G4. Additionally, the average speedup—calculated as the ratio of the verification time for conventional SMT solving to that of MUC-G4—is approximately 9.2 times. However, there are three instances where MUC-G4 perform much worse than vanilla SMT solving. In these cases, the proofs generated by the original networks cannot guide the search for the pruned network.

% \begin{figure}[hbtp]
%     \centering
% \includegraphics[width=0.6\columnwidth]{Figure/mnist_time.png}
%     \caption{Running time comparison of SMT and MUC-G4 enhanced method on MNIST benchmark.}
%     \label{mnisttime}
% \end{figure}

\subsubsection{\textbf{Analysis}}
We further discuss some of the possible findings in our experiments.

\smallskip\noindent\textit{Quantization vs. Pruning}
While network pruning can be viewed as a specific form of network quantization, we find that these two approaches differ in terms of the validity and effectiveness of the generated proofs (unsatisfiable cores). As anticipated, quantized models exhibit a relatively high acceptance of the unsatisfiable cores generated during the verification process of the original network. This suggests that the overall activation states of the ReLU neurons are well preserved. In contrast, although network pruning also maintains an acceptable level of validity for the generated proofs, it may compromise the internal similarity among neurons. This phenomenon can diminish the similarity between the unsatisfiable cores of the original and pruned networks, resulting in a more costly search process during verification. %During our experiments, as discussed in the previous section, we observed that pruning can sometimes compromise the structural information inherent in the activation states of ReLU neurons, particularly when compared to quantization. In contrast, SMT solving offers a more effective approach to pruning that preserves the validity of the generated proofs.

\smallskip\noindent\textit{Overall Performance}
We also notice that the overall performance may not be satisfactory. This is largely due to the running time being highly dependent on the chosen theory solver. We selected a conventional LP solver to eliminate potential hybrid effects that the searching strategies of existing neural network verifiers might introduce. However, it is possible to employ any neural network verifier as the theory solver for the subproblem generated given a search path, to achieve acceleration with MUC-G4. 

\section{Related Work} \label{relatedwork}
Two existing mainstream formal verification strategies for deep neural networks are constraint-based verification and abstraction-based verification~\cite{Albarghouthi2021IntroductionTN}. Constraint-based methods~\cite{Katz2019TheMF,Katz2017ReluplexAE} typically rely on constraint solvers, offering soundness and completeness guarantees but often require significant computational resources. Even for basic neural networks and properties, these methods can exhibit NP-completeness~\cite{Katz2017ReluplexAE}.
In contrast, abstraction-based verification approaches~\cite{2018AI,Singh2019AnAD,2018Fast} prioritize scalability over completeness. At a broader level, techniques such as interval analysis and bound propagation~\cite{2018Efficient,2018Formal} can be seen as specific forms of state abstraction.
CROWN~\cite{Zhang2018EfficientNN} has expanded the concept of bound propagation from value-based bounds to linear function-based bounds of neurons, making significant strides in scalability. However, it does not guarantee complete verification.
In~\cite{Wang2021BetaCROWNEB}, the Branch-and-Bound (BaB) technique and GPU acceleration are employed to maintain completeness while enhancing scalability in verification processes. This combination address the trade-off between completeness and scalability in formal verification of deep neural networks.

Conflict-Driven Clause Learning (CDCL) \cite{DBLP:series/faia/0001LM21} is a vital technique widely adopted in contemporary SAT and SMT solvers \cite{DBLP:journals/jsat/Sebastiani07,DBLP:journals/jacm/NieuwenhuisOT06}. This framework integrates conflict clause learning, unit propagation, and various strategies to enhance efficiency, proving effective in numerous branch-and-bound problems, including SAT and linear SMT. Recent work by \cite{DBLP:conf/tase/LiuYZH24} incorporates the algorithm into neural network verification, introducing DeepCDCL, which utilizes conflict clauses to prune the search space and expedite the solving process. Additionally, they propose an asynchronous clause learning and management structure that reduces time overhead during solving, thus accelerating the overall process. However, their approach primarily focuses on optimizing the search space within the context of the network's verification, without extending these advancements to enhance incremental verification in scenarios involving network compression.

Regarding the incremental verification in neural networks, several works that focus on this problem are worth mentioning. Ivan~\cite{DBLP:journals/pacmpl/UgareBM023} creates a specification tree, which is a new type of tree data structure that represents the trace of BaB. This tree is generated by running the complete verifier on the original network. DeepInc~\cite{DBLP:journals/corr/abs-2302-06455} simulates important parts of Reluplex solving process, aiming to determine the verification result of the branches in the original procedure, and uses a heuristic approach to check if the proofs are still valid for the modified DNN. ~\cite{DBLP:conf/cav/FischerSDSV22} introduced the idea of sharing certificates between specifications, leveraging proof reuse from $L_\infty$ specifications computed with abstract interpretation-based analyzers via proof templates. This approach facilitates faster verification of patch and geometric perturbations. Similarly, ~\cite{DBLP:journals/pacmpl/UgareSM22} demonstrated the feasibility of reusing proofs between networks, employing network adaptable proof templates on specific properties such as patch, geometric, and $L_0$. While these efforts advance incremental verification techniques, they do not address the challenge of incremental and complete verification with possibly altered network structure in network compression scenarios, which is the primary focus of our work.

\section{Conclusion} \label{conclusion}
In our research, we have formally defined the problem of incremental verification in version control within network compression-based software customization. This task becomes challenging due to the potential alterations in weights and network structure, which can be captured through Satisfiability Modulo Theories (SMT) formulas. To address the variability in neural compression techniques, we introduced MUC-G4, a method that guides the verification search with MUC-based proofs generated when encountering infeasible paths in the Theory solver, enabling a systematic approach for incrementally verifying compressed networks.

Moving forward, our future works will focus on several key areas. Firstly, we aim to enhance the efficiency of generating reusable proofs in parallel schemes to expedite the reusability process. Secondly, we plan to explore the extension of SMT-based proof reuse techniques to encompass a broader range of activation functions, such as hard Tanh and PReLU. Additionally, we intend to broaden the scope of assumptions related to changes in network structure to adapt our methodologies for incremental verification scenarios involving neuron augmentation in practical settings. Finally, we aspire to leverage the generated proofs within the SMT tree to guide the network compression process, thereby producing models that are more secure and resilient.

%
% ---- Bibliography ----
%
% BibTeX users should specify bibliography style 'splncs04'.
% References will then be sorted and formatted in the correct style.
%
\bibliographystyle{splncs04}
\bibliography{mybibliography}
%
% \begin{thebibliography}{8}
% \bibitem{ref_article1}
% Author, F.: Article title. Journal \textbf{2}(5), 99--110 (2016)

% \bibitem{ref_lncs1}
% Author, F., Author, S.: Title of a proceedings paper. In: Editor,
% F., Editor, S. (eds.) CONFERENCE 2016, LNCS, vol. 9999, pp. 1--13.
% Springer, Heidelberg (2016). \doi{10.10007/1234567890}

% \bibitem{ref_book1}
% Author, F., Author, S., Author, T.: Book title. 2nd edn. Publisher,
% Location (1999)

% \bibitem{ref_proc1}
% Author, A.-B.: Contribution title. In: 9th International Proceedings
% on Proceedings, pp. 1--2. Publisher, Location (2010)

% \bibitem{ref_url1}
% LNCS Homepage, \url{http://www.springer.com/lncs}, last accessed 2023/10/25
% \end{thebibliography}

\appendix
\section{MUC Guided Incremental Verification}\label{algo}
\begin{algorithm}[H]
    
    \caption{MUC Guided Incremental Verification for DNN Compression}
    \label{algo:main}
    \textbf{Input:} Proof $p(f)$ of a verification problem $(f,P,Q)$,

    the compressed network $f'$

    \textbf{Output:} SAT if $(f',P,Q)$ does not hold, UNSAT otherwise.
    
    \If {Check($f$) = SAT}{
        \If{Check(sat\_core($f$)) = SAT}{
            return SAT
        }
        \Else{
            \For{unchecked\_core($f$) $c_{\text{unk}}$ in $p(f)$}{
                \If{Check($c_{\text{unk}}$) = SAT}{
                    return SAT
                }
            }
            \For{unsat\_core($f$) $c_{\text{unsat}}$ in $p(f)$}{
                \If{Check($c_{\text{unsat}}$) = SAT}{
                    \For{possible branch $b$ containing $c_{\text{unsat}}$}{
                        \If{Check($b$) = SAT}{
                            return SAT
                        }
                    }
                }
            }
            return UNSAT
        }
    }
    \Else{
        \For{unchecked\_core($f$) $c_{\text{unk}}$ in $p(f)$}{
            \If{Check($c_{\text{unk}}$) = SAT}{
                return SAT
            }
        }
        \For{unsat\_core($f$) $c_{\text{unsat}}$ in $p(f)$}{
            \If{Check($c_{\text{unsat}}$) = SAT}{
                \For{possible branch $b$ containing $c_{\text{unsat}}$}{
                    \If{Check($b$) = SAT}{
                        return SAT
                    }
                }
            }
        }
        return UNSAT
    }
    
\end{algorithm}

\end{document}